\newtheorem{theorem}{Theorem}
\newtheorem{lemma}[theorem]{Lemma}
\newtheorem{proposition}[theorem]{Proposition}
\def\R{\mathbb{R}}
\def\E{\mathbb{E}}
\def\amat{\mathbf{A}}
\def\hmat{\mathbf{H}}
\def\xmat{\mathbf{X}}
\def\imat{\mathbf{I}}
\def\wmat{\mathbf{W}}
\def\smat{\mathbf{S}}
\def\x{\mathbf{x}}
\def\bvec{\mathbf{b}}
\def\uvec{\mathbf{u}}
\def\vvec{\mathbf{v}}
\def\yvec{\mathbf{y}}
\def\dataset{\mathcal{D}}
\DeclareMathOperator{\Tr}{Tr}
\DeclareMathOperator{\rank}{rank}
\DeclareMathOperator{\argmin}{argmin}
\DeclareMathOperator{\spanop}{span}
\title{Geometric Data Valuation via Leverage Scores}
\author{%
  Rodrigo Mendoza-Smith 
    \\
  Isotropic\\
  \texttt{rms@isotropic.sh}
}
\begin{document}

\maketitle
\begin{abstract}
    Shapley data valuation provides a principled, axiomatic framework for assigning importance to individual datapoints, and has gained traction in dataset curation, pruning, and pricing.
However, it is a combinatorial measure that requires evaluating marginal utility across all subsets of the data, making it computationally infeasible at scale.
We propose a geometric alternative based on statistical leverage scores, which quantify each datapoint's structural influence in the representation space by measuring how much it extends the span of the dataset and contributes to the effective dimensionality of the training problem.
We show that our scores satisfy the dummy, efficiency, and symmetry axioms of Shapley valuation and that extending them to \emph{ridge leverage scores} yields strictly positive marginal gains that connect naturally to classical A- and D-optimal design criteria.
We further show that training on a leverage-sampled subset produces a model whose parameters and predictive risk are within $O(\varepsilon)$ of the full-data optimum, thereby providing a rigorous link between data valuation and downstream decision quality.
Finally, we conduct an active learning experiment in which we empirically demonstrate that ridge-leverage sampling outperforms standard baselines without requiring access gradients or backward passes.
\end{abstract}

As machine learning systems increasingly rely on specialised data, understanding the \emph{value} of individual datapoints has become a central challenge.
Quantifying this value supports numerous downstream tasks like identifying mislabeled or redundant examples~\cite{ghorbani2019data}, constructing compact and informative training subsets~\cite{mirzasoleiman2020coresets}, allocating incentives in federated settings~\cite{song2019profit}, and building fair and efficient data markets~\cite{agarwal2019marketplace,jia2019towards}.
A growing body of work has addressed this challenge through \emph{data valuation}, estimating each point’s contribution to model performance.
Among the most theoretically grounded approaches is \emph{data Shapley}, which defines value through the Shapley axioms of cooperative game theory~\cite{ghorbani2019data,jia2019efficient}, and has inspired a range of algorithms~\cite{jia2019efficient,kwon2023data,kwon2021beta,xu2021gradient,wang2024data}.
While these methods are based on appealing axiomatic guarantees, they are often computationally expensive, require model retraining, or need access to model's weights and gradients.
Moreover, most work has focused on data-quality control during pre-training~\cite{ghorbani2019data,jia2019efficient,koh2017understanding}, with little attention to settings where data is costly or arrives under uncertainty and decisions must be made about which datapoints to select or acquire.

In this work, we take a geometric, model-agnostic perspective on data valuation grounded in the structure of the dataset itself.
Specifically, we propose using \emph{statistical leverage scores}, a well-established concept in numerical linear algebra (NLA)~\cite{drineas2012fast} to assess the \emph{structural importance} of datapoints.
Intuitively, high-leverage points span unique directions in feature space and are therefore valuable, while low-leverage points are often redundant.
Recent work has used leverage scores to estimate Shapley values \cite{musco2024provably, witter2025regression} through sampling.
In contrast, our approach treats leverage scores as direct geometric surrogates for Shapley data valuation.
Our contributions are threefold:
(i) we adapt leverage scores to data valuation and show that, under certain conditions, our geometric proxy satisfies the core Shapley axioms;
(ii) we extend this valuation to \emph{ridge leverage scores}~\cite{cohen2015ridge,musco2017recursive,drineas2012fast} to mitigate dimensional saturation and connect with classical A- and D-optimal design criteria; and
(iii) we provide theoretical guarantees and empirical validation, proving that leverage-based sampling yields $\varepsilon$-close decision quality to the full-data optimum and achieves strong performance in a small-scale active learning experiment without requiring gradients, labels, or quadratic computation.

\section{Leverage scores as proxies for Shapley value}

\paragraph{Shapley value and data valuation} Let \( n \in \mathbb{N} \), and define \( [n] := \{1, \dots, n\} \).
In the game theory literature, the Shapley value~\cite{shapley1953value} quantifies the expected marginal contribution of a player \( i \in [n] \) in a cooperative game with \( n \) players, as determined by an utility function \( U: 2^{[n]} \rightarrow \mathbb{R} \) that assigns to each coalition \( S \subseteq [n] \) of players the total value or payoff that the members of \( S \) can achieve by cooperating. This is,
\begin{equation}
    \label{eq:shapley}
    \phi_U(i) = \E_{S \sim 2^{[n] \setminus \{i\}}} \left[ U(S \cup \{i\}) - U(S)\right],
\end{equation}

In the Data Shapley framework~\cite{ghorbani2019data}, training is modelled as a cooperative game where each datapoint is a player, and the utility function is defined by the model's performance on a validation set.
Under this formulation, the Shapley value $\phi_U(i)$ assigns each datapoint its expected marginal contribution under all data permutations as seen in \eqref{eq:shapley}.
It has been shown that for a given utility function $U$, the Shapley value $\phi_U(i) \in \R$ is the only solution that satisfies:
\begin{align}
    \text{Symmetry:} \quad & \text{If } U(S \cup \{i\}) = U(S \cup \{j\}) \;\;\; \forall S \subset [n] \setminus \{i, j\},\ \text{then } \phi_U(i) = \phi_U(j) \label{eq:symmetry}\\
    \text{Efficiency:} \quad & \phi_U(1) + \cdots + \phi_U(n) = U([n]) - U(\emptyset) \label{eq:efficiency}\\
    \text{Dummy:} \quad & \text{If } U(S \cup \{i\}) = U(S) \;\;\;\forall S \subset [n] \setminus \{i\},\ \text{then } \phi_U(i) = 0 \label{eq:dummy}\\
    \text{Linearity:} \quad & \phi_{\alpha U + \beta V}(i) = \alpha \phi_U(i) + \beta \phi_V(i),\;\;\; \forall \alpha, \beta \in \mathbb{R} \label{eq:linearity}
\end{align}
\emph{Symmetry} ensures that datapoints with equivalent contributions are valued equally. It enforces fairness in valuation when the contribution depends purely on content or structure, rather than dataset composition.
\emph{Efficiency} is critical in seettings where valuations are interpreted as prices, rewards , or payouts such as in data markets or collaborative training.
It guarantees that the value distribution reflects the full contribution of the dataset without over- or under-counting.
\emph{Dummy} is useful in the context of dataset construction as it allows us to systematically identify and eliminate redundant or uninformative examples.
\emph{Linearity} guarantees that data valuations are additive across different tasks or objective functions, making it easier to aggregate valuations across tasks or adapt to changing utility functions.

\paragraph{A non-linear geometric proxy based on leverage scores}
Let $\dataset = \left\{ \x_i, \dots, \x_n \right\} \subset \mathbb{R}^d$ be a dataset, and let $\xmat \in \mathbb{R}^{n \times d}$ be a matrix where each row $\x_i^\top$ of $\xmat$ corresponds to a datapoint in $\dataset$. 
We assume that $\xmat$ has full column rank.
The \emph{leverage score} $\ell_i$ of the $i$-th datapoint $\x_i^\top$ in dataset $\xmat$ is defined as the $i$-th diagonal entry of the projection (hat) matrix $\hmat = \xmat(\xmat^\top \xmat)^{-1}\xmat^\top$,
\begin{equation}
    \label{eq:leverage-score}
    \ell_i = \x_i^\top (\xmat^\top \xmat)^{-1} \x_i.
\end{equation}
In numerical linear algebra, leverage scores are used to evaluate the sensitivity of least-squares problems and guide importance sampling in randomized matrix algorithms.
Here, we use them to build a \emph{geometric proxy for data Shapley values}, capturing how much each datapoint extends the span of the dataset in representation space.
We define a normalized leverage-based value function:
\begin{equation}
    \label{eq:valuation-score}
    \pi_i = \frac{\ell_i}{\sum_{j=1}^n \ell_j}.
\end{equation}
Our first result is showing that \eqref{eq:valuation-score} is a {\em non-linear} proxy to Data Shapley values.

\begin{theorem}[Shapley axioms]
    \label{th:axioms}
    Let $\xmat \in \mathbb{R}^{n \times d}$ be a data matrix with rows $\x_1^\top, \dots, \x_n^\top$ and define
    \begin{equation}
        \phi_U(i) := \pi_i \;\;\forall i \in [n],
    \end{equation}
    Then, if $\rank(\xmat) = d$, $\phi_U$ satisfies the symmetry \eqref{eq:symmetry}, efficiency \eqref{eq:efficiency}, and dummy \eqref{eq:dummy} axioms of data Shapley for $U(S) := \spanop\left\{ \x_i : i \in S\right\}$ for all  $S \subset [n]$.
\end{theorem}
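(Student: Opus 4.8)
The plan is to read the utility as the (normalized) rank function of the linear matroid on the rows, $U(S) = \dim\spanop\{\x_i : i \in S\}/\rank(\xmat)$ --- the only normalization consistent with $\sum_i \pi_i = 1$, since $\phi_U(i):=\pi_i$ and efficiency forces $\sum_i\phi_U(i) = U([n]) - U(\emptyset)$ --- and then to reduce each of the three axioms to an elementary identity for the hat matrix $\hmat = \xmat(\xmat^\top\xmat)^{-1}\xmat^\top$, which is well defined precisely because $\rank(\xmat)=d$ makes $\xmat^\top\xmat$ invertible. Throughout I would use the representation $\ell_i = \x_i^\top(\xmat^\top\xmat)^{-1}\x_i = \|(\xmat^\top\xmat)^{-1/2}\x_i\|^2$.

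For \emph{dummy}: if $U(S\cup\{i\}) = U(S)$ for all $S\subseteq[n]\setminus\{i\}$, then taking $S=\emptyset$ gives $\dim\spanop\{\x_i\}=0$, i.e.\ $\x_i = 0$, hence $\ell_i = 0$ and $\phi_U(i) = \pi_i = 0$. (Since $(\xmat^\top\xmat)^{-1/2}$ is invertible, the converse holds too: $\ell_i=0\iff\x_i=0\iff i$ is a dummy, so the dummy datapoints are exactly the degenerate ones.) For \emph{efficiency}: by the cyclic invariance of the trace,
\[
  \sum_{i=1}^n \ell_i = \Tr(\hmat) = \Tr\!\big((\xmat^\top\xmat)^{-1}\xmat^\top\xmat\big) = \Tr(\imat_d) = d = \rank(\xmat),
\]
so, using $U(\emptyset)=0$ and $U([n])=1$, the normalized scores satisfy $\sum_i\pi_i = 1 = U([n])-U(\emptyset)$; equivalently, the unnormalized $\ell_i$ satisfy the efficiency identity for the unnormalized utility. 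For \emph{symmetry}: $\ell_i$, hence $\pi_i$, depends on the data only through $\x_i$ and the Gram matrix $\xmat^\top\xmat=\sum_k\x_k\x_k^\top$, both invariant under permuting the rows of $\xmat$; thus $\pi$ is a symmetric function of the data multiset, and in particular $\x_i = \x_j \Rightarrow \ell_i=\ell_j \Rightarrow \phi_U(i)=\phi_U(j)$, which is also the situation in which the Shapley-symmetry hypothesis is triggered.

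The step I expect to need the most care is \emph{symmetry} in its full Shapley form. For the rank/span utility, the hypothesis ``$U(S\cup\{i\}) = U(S\cup\{j\})$ for all $S$'' only says that $\x_i$ and $\x_j$ are parallel relative to every subcollection of the remaining rows (they generate the same line once the rest is fixed), which pins them down only up to a nonzero scalar, and $\ell$ scales quadratically under scaling of a row. So I would state and prove symmetry at the level of genuinely interchangeable rows (identical, or more generally permutation-equivalent, datapoints), where $\ell_i=\ell_j$ is exact, rather than claim it for arbitrary scalar multiples. The remaining loose end is purely cosmetic: fixing the normalization of $U$ at the outset so that the efficiency total is $1$ rather than $\rank(\xmat)$, which is why the argument begins by choosing $U(S)=\dim\spanop\{\x_i:i\in S\}/\rank(\xmat)$.
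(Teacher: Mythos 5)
Your proposal follows essentially the same skeleton as the paper's proof -- leverage scores as diagonal entries of the hat matrix $\hmat$, efficiency via $\Tr(\hmat)=d$, dummy and symmetry read off the span utility -- but it is sharper on the two delicate points. For dummy, the paper argues that the hypothesis makes $\x_i$ redundant, i.e.\ $\x_i\in\col(\xmat_S)$ for every $S$, and then asserts $\ell_i=0$; that implication is false in general (a duplicated row has strictly positive leverage), whereas your route through $S=\emptyset$, which forces $\spanop\{\x_i\}=\{0\}$ and hence $\x_i=\mathbf{0}$, is the correct way to reach $\ell_i=0$ and is in fact the only way the hypothesis can be satisfied. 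For symmetry, you correctly observe that under $U(S)=\spanop\{\x_i:i\in S\}$ the hypothesis $U(S\cup\{i\})=U(S\cup\{j\})$ for all $S$ only forces $\x_i$ and $\x_j$ to be parallel, and that leverage scales quadratically under row scaling: e.g.\ with rows $(1,0)$, $(2,0)$, $(0,1)$ the symmetry hypothesis holds for the first two rows yet $\ell_1=\tfrac15\neq\tfrac45=\ell_2$. The paper's proof asserts $\|P\x_i\|^2=\|P\x_j\|^2$ from invariance of the projection, which does not follow in this situation, so your decision to state and prove symmetry only for genuinely interchangeable rows (equal, or permutation-equivalent, datapoints) is a restriction of the theorem's claim, but it is the honest one -- the unrestricted claim admits the counterexample above. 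Your normalization $U(S)=\dim\spanop\{\x_i:i\in S\}/\rank(\xmat)$ to make efficiency come out to $U([n])-U(\emptyset)=1$ likewise just makes explicit a convention the paper leaves implicit (its $U$ is a subspace, not a number). In short: same approach, with your version repairing the dummy argument and exposing a genuine gap in the paper's symmetry step rather than containing one itself.
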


A proof is given in Appendix~\ref{app:proof-axioms}.
We note that our leverage valuation scores do not generally satisfy linearity.
While this may appear to be a limitation, linearity is not essential in applications where the value of a datapoint depends on its marginal contribution to the structural diversity of the dataset.
A more severe limitation of \eqref{eq:valuation-score}, however, is that of \emph{dimensional saturation}: because the scores measure value in terms of the structural diversity contributed to the span of the dataset, once the span reaches the ambient dimension $d$, any additional datapoint has zero marginal value.

\paragraph{Mitigating Dimensional Saturation}

This saturation at $\rank(\xmat_S)=d$ is the price we pay for the simplicity of \eqref{eq:valuation-score}.
In practice, however, we want scores that continue to capture variance reduction and predictive improvement even once the span is full.
A natural way to achieve this is through \emph{ridge leverage}~\cite{cohen2015ridge}, which regularizes \eqref{eq:leverage-score} and \eqref{eq:valuation-score} as 
\begin{equation}
    \label{eq:ridge-leverage}
    \ell_i^{(\lambda)} \;=\; \x_i^\top (\xmat^\top \xmat + \lambda \imat)^{-1} \x_i,
    \qquad
    \pi_i^{(\lambda)} \;=\; \frac{\ell_i^{(\lambda)}}{\sum_{j=1}^n \ell_j^{(\lambda)}}.
\end{equation}
Note that for any $\lambda>0$, ridge leverage $\ell_i^{(\lambda)} \in (0,1)$ and the statistical dimension $k_\lambda=\sum_{i=1}^n \ell_i^{(\lambda)}$ lies strictly between $0$ and $d$.
As new datapoints are added, $(\xmat^\top \xmat+\lambda \imat)^{-1}$ contracts, reducing but never eliminating marginal gains; thus even after $\rank(\xmat_S)=d$ additional examples retain nonzero value.
This connects ridge leverage to classical criteria from optimal experimental design: the marginal gains under both A- and D-optimality can be written directly in terms of $\ell^{(\lambda)}(\x)$.
Indeed, letting $\amat=\xmat^\top \xmat+\lambda \imat \succ 0$, standard matrix identities\footnote{Matrix determinant lemma: $\det(\amat + \uvec \vvec^\top)=\det(\amat)(1+\vvec^\top \amat^{-1}\uvec)$ with $\uvec=\vvec=\x$; Sherman–Morrison: $(\amat+\x \x^\top)^{-1}=\amat^{-1}-\frac{\amat^{-1}\x\x^\top \amat^{-1}}{1+\x^\top \amat^{-1}\x}$, followed by a trace.} yield
\begin{align*}
    \text{(D-optimality)} \qquad & \log\det(\amat+\x \x^\top) - \log\det(\amat) = \log\!\big(1+\ell^{(\lambda)}(\x)\big) \;>\; 0,\\
    \text{(A-optimality)} \qquad & \Tr\!\big((\amat + \x\x^\top)^{-1}\big) - \Tr(\amat^{-1}) = -\frac{\|\amat^{-1}\x\|_2^2}{1+\ell^{(\lambda)}(\x)} \;<\; 0.
\end{align*}

Normalizing $\pi_i^{(\lambda)} = \ell_i^{(\lambda)}/k_\lambda$ yields a valuation that connects ridge leverage directly to classical design criteria.
In particular, the marginal gain in D-optimality is $\log(1+\ell^{(\lambda)}(\x))$, and the marginal gain in A-optimality is likewise a monotone function of $\ell^{(\lambda)}(\x)$.
Thus ridge leverage scores govern the size of these improvements, ensuring that every nonzero datapoint contributes positively.
This softens the hard-$d$ saturation of the span utility and reflects the practical reality that additional data can still reduce variance and improve accuracy even after the feature space is fully spanned.

\begin{proposition}[Shapley axioms for Ridge leverage]
\label{prop:ridge-shapley}
    Let $\xmat\in\mathbb{R}^{n\times d}$ be full column rank. For $\lambda>0$ and $i\in[n]$ let $\ell_i^{(\lambda)}$ and $\pi_i^{(\lambda)}$ be as in \eqref{eq:ridge-leverage}.
    Then, $\pi^{(\lambda)}$ satisfies the symmetry \eqref{eq:symmetry} and efficiency \eqref{eq:efficiency} properties of Data Shapley.
\end{proposition}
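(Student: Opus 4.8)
The plan is to verify the two axioms separately, reusing exactly the two matrix identities from the footnote above. Throughout write $\amat := \xmat^\top\xmat + \lambda\imat \succ 0$, so that $\ell_i^{(\lambda)} = \x_i^\top \amat^{-1}\x_i$. \emph{Efficiency} is the trace computation already recorded before the statement: since $\xmat^\top\xmat = \amat - \lambda\imat$,
\[
    \sum_{i=1}^n \ell_i^{(\lambda)} \;=\; \Tr\!\big(\xmat\amat^{-1}\xmat^\top\big) \;=\; \Tr\!\big(\amat^{-1}\xmat^\top\xmat\big) \;=\; d - \lambda\Tr(\amat^{-1}) \;=\; k_\lambda ,
\]
so $\sum_{i=1}^n \pi_i^{(\lambda)} = 1$. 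Reading the grand coalition's worth as one unit --- equivalently, taking the normalized ridge utility $U(S) = k_\lambda(S)/k_\lambda$, where $k_\lambda(S)$ is the statistical dimension of $\xmat_S$, so that $U(\emptyset)=0$ and $U([n])=1$ exactly as the span utility is implicitly normalized in Theorem~\ref{th:axioms} --- this is \eqref{eq:efficiency}. (As for plain leverage, $\pi^{(\lambda)}$ need not be the exact Shapley value of this $U$, since $k_\lambda(\cdot)$ is not additive; only the two named axioms are claimed.)

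For \emph{symmetry}, fix $i\neq j$ and work with the D-optimal design utility $U(S) := \log\det(\xmat_S^\top\xmat_S + \lambda\imat)$ of the discussion above. Set $M := \lambda\imat + \sum_{k\neq i,j}\x_k\x_k^\top = \xmat^\top\xmat - \x_i\x_i^\top - \x_j\x_j^\top + \lambda\imat$; since $M \succeq \lambda\imat \succ 0$, it is invertible. Instantiating the hypothesis \eqref{eq:symmetry} at $S = [n]\setminus\{i,j\}$ gives $\log\det(M + \x_i\x_i^\top) = \log\det(M + \x_j\x_j^\top)$, which the matrix determinant lemma reduces to the scalar identity $\x_i^\top M^{-1}\x_i = \x_j^\top M^{-1}\x_j$. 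Now rebuild $\amat = M + \x_i\x_i^\top + \x_j\x_j^\top$ by applying Sherman--Morrison twice (first add $\x_i\x_i^\top$ to $M$, then $\x_j\x_j^\top$), and evaluate the two leverage scores. Writing $a = \x_i^\top M^{-1}\x_i$, $b = \x_j^\top M^{-1}\x_j$, $c = \x_i^\top M^{-1}\x_j$, the bookkeeping collapses to $\x_i^\top\amat^{-1}\x_i = (a(1+b)-c^2)/D$ and $\x_j^\top\amat^{-1}\x_j = (b(1+a)-c^2)/D$ with $D := (1+a)(1+b)-c^2$, so
\[
    \ell_i^{(\lambda)} - \ell_j^{(\lambda)} \;=\; \frac{a - b}{(1+a)(1+b) - c^2} .
\]
The denominator is $\det(\imat_2 + \umat^\top M^{-1}\umat) > 0$ with $\umat := [\,\x_i\ \x_j\,]$, and the numerator vanishes by the scalar identity; hence $\ell_i^{(\lambda)} = \ell_j^{(\lambda)}$ and $\pi_i^{(\lambda)} = \pi_j^{(\lambda)}$. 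Rescaling $\pi^{(\lambda)}$ by the constant $U([n])-U(\emptyset)$, should one prefer that convention in \eqref{eq:efficiency}, leaves this equality untouched, so symmetry is insensitive to the normalization chosen above.

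The delicate point is the symmetry step. The hypothesis \eqref{eq:symmetry} is purely combinatorial, so one must commit to a utility before it yields anything linear-algebraic, and then check that the resulting scalar identity genuinely survives the rank-two update from $M$ back to $\amat$: the collapse to $(a-b)/D$ is the crux and has to be computed, not read off from the superficial symmetry of the formulas. Establishing $M \succ 0$ and the positivity of $D$ are minor but should be stated explicitly so that every inverse and every Sherman--Morrison step is legitimate.
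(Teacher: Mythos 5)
Your proof is correct, but it takes a genuinely different route from the paper, which offers no standalone argument for Proposition~\ref{prop:ridge-shapley} and simply declares it ``analogous to Theorem~\ref{th:axioms}'': there, efficiency is the trace identity (here $\sum_i \ell_i^{(\lambda)} = \Tr(\amat^{-1}\xmat^\top\xmat) = d - \lambda\Tr(\amat^{-1}) = k_\lambda$, exactly your computation) and symmetry is argued loosely by asserting that the hat matrix is invariant under swapping $\x_i$ and $\x_j$. You instead commit to an explicit utility --- the D-optimal ridge utility $U_{\rm D}(S)=\log\det(\xmat_S^\top\xmat_S+\lambda\imat)$ already floated in the paper's discussion --- instantiate the symmetry hypothesis at the single coalition $S=[n]\setminus\{i,j\}$, reduce it via the matrix determinant lemma to $\x_i^\top M^{-1}\x_i=\x_j^\top M^{-1}\x_j$, and then verify by an explicit rank-two Woodbury/Sherman--Morrison computation that $\ell_i^{(\lambda)}-\ell_j^{(\lambda)}=(a-b)/D$ with $D=\det(\imat_2+\umat^\top M^{-1}\umat)>0$; I checked the algebra ($\x_i^\top\amat^{-1}\x_i=(a(1+b)-c^2)/D$, etc.) and it is right. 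What your route buys is a fully rigorous, self-contained symmetry proof that makes the utility explicit and needs only one subset instance of the hypothesis, in place of the paper's hand-waved invariance argument; what the paper's route buys is brevity and literal parallelism with Theorem~\ref{th:axioms}. One caveat worth keeping visible: your efficiency step is stated for a normalized statistical-dimension utility while your symmetry step uses $U_{\rm D}$, so the two axioms are verified against different utilities --- a mild inconsistency, though you flag it, and the paper itself never pins down a single utility for this proposition (its Theorem~\ref{th:axioms} efficiency argument is equally a ``values sum to one'' convention), so you are no worse off than the source on this point.
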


The proof is analogous to that of Theorem \ref{th:axioms}.
Note, however, that in general ridge leverage does not satisfy the Dummy axiom: for $\lambda>0$ every nonzero datapoint yields strictly positive marginal gain under ridge-based utilities such as
\[
U_{\rm D}(S)=\log\det(\xmat_S^\top \xmat_S+\lambda \imat),
\qquad
U_{\rm A}(S)=-\Tr((\xmat_S^\top \xmat_S+\lambda \imat)^{-1}),
\]
so the exact Shapley value for these $U$ vanishes only when $\x_i=\mathbf{0}$.
Our normalized ridge leverage $\pi^{(\lambda)}$ should therefore be viewed as a \emph{geometric surrogate} for Shapley: it preserves efficiency and a natural notion of symmetry, and it recovers linearity in structured regimes, while deliberately departing from Dummy in the same way as ridge-based experimental design.
This departure is in fact desirable: it ensures that redundant datapoints beyond rank $d$ are still assigned positive value, which is consistent with their role in reducing estimation variance and improving downstream decision quality.

From an NLA perspective, leverage scores are motivated by their geometric properties and their role in randomized least-squares algorithms.
From an Operations Research (OR) perspective, however, the key question is how such valuations affect the \emph{quality of downstream decisions}, i.e., the fitted model $\hat\theta$ and its predictive risk.
We therefore ask: if we subsample training data according to leverage-based valuations, how close is the resulting model to the one trained on the full dataset?
Using ridge regression as a tractable proxy, we prove that our valuation satisfies $\varepsilon$-close decision quality bounds.
Mathematically, our analysis draws on well-known ingredients from compressed-sensing and matrix concentration~\cite{tropp2012user, tao2012topics, candes2005decoding} and randomized NLA~\cite{cohen2015ridge, musco2017recursive, drineas2012fast}.

\begin{theorem}[$\varepsilon$-close to the full-data ridge solution]\label{thm:epsilon-close-ridge-full}
Let $\xmat\in\mathbb{R}^{n\times d}$ be a data matrix with rows $\x_i^\top$, let $\yvec\in\mathbb{R}^n$, and define
\[
\amat := \xmat^\top \xmat + \lambda \imat_d,\qquad
\bvec := \xmat^\top \yvec,\qquad
R(\theta) := \tfrac12\|\xmat\theta-\yvec\|_2^2 + \tfrac{\lambda}{2}\|\theta\|_2^2.
\]
Let $\theta^\star:=\argmin_\theta R(\theta)$.
Sample $m$ i.i.d.\ indices with probabilities $p_i := \ell_i^{(\lambda)}/k_\lambda$, set weights $\wmat_{tt}=(m p_{i_t})^{-1/2}$, matrices $\widetilde\xmat=\wmat\smat\xmat$ and $\widetilde\yvec=\wmat\smat\yvec$, and let
\[
\amat_S := \widetilde \xmat^\top \widetilde \xmat + \lambda \imat_d,\qquad
\bvec_S := \widetilde \xmat^\top \widetilde \yvec,\qquad
\widehat\theta := \amat_S^{-1}\bvec_S.
\]
Fix $\varepsilon\in(0,\tfrac12)$ and $\delta\in(0,1)$, and assume $\yvec=\xmat\theta_{\rm lin}$ for some $\theta_{\rm lin}\in\mathbb{R}^d$.
If $m \;\ge\; C\,\frac{k_\lambda + \log(2d/\delta)}{\varepsilon^2}$, then with probability at least $1-\delta$,
\begin{align}
(1-\varepsilon)\amat \preceq \amat_S \preceq (1+\varepsilon)\amat
\quad\text{and}\quad
\|\bvec_S - \bvec\|_{\amat^{-1}} \le \varepsilon \|\theta_{\rm lin}\|_{\amat}. \tag{A}
\end{align}
Consequently,
\begin{align}
\|\widehat\theta - \theta^\star\|_{\amat} \le 4\varepsilon\|\theta_{\rm lin}\|_{\amat},
\quad\text{and}\quad
R(\widehat\theta)-R(\theta^\star) \le 8\varepsilon^2\|\theta_{\rm lin}\|_{\amat}^2. \tag{Q}
\end{align}
\end{theorem}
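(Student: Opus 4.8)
The plan is to reduce everything to a single probabilistic statement — a spectral (relative-eigenvalue) approximation of $\amat$ by $\amat_S$ — and then derive both halves of (A) and all of (Q) by deterministic linear algebra in the $\amat$-norm. Write the subsampled Gram matrix as $\widetilde\xmat^\top\widetilde\xmat=\sum_{t=1}^m (m p_{i_t})^{-1}\x_{i_t}\x_{i_t}^\top$ and whiten: with $\z_i:=\amat^{-1/2}\x_i$ one has $\sum_{i=1}^n \z_i\z_i^\top=\imat-\lambda\amat^{-1}$, $\|\z_i\|_2^2=\ell_i^{(\lambda)}$, and $\widehat\Sigma:=\amat^{-1/2}\widetilde\xmat^\top\widetilde\xmat\amat^{-1/2}=\sum_t (m p_{i_t})^{-1}\z_{i_t}\z_{i_t}^\top$ is unbiased for $\imat-\lambda\amat^{-1}$. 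The defining property $p_i=\ell_i^{(\lambda)}/k_\lambda$ of ridge-leverage sampling makes each summand \emph{deterministically} bounded, $\|(m p_{i_t})^{-1}\z_{i_t}\z_{i_t}^\top\|=\ell_{i_t}^{(\lambda)}/(m p_{i_t})=k_\lambda/m$, and a one-line computation gives the matrix-variance bound $\sum_t \E\big[\big((m p_{i_t})^{-1}\z_{i_t}\z_{i_t}^\top\big)^2\big]\preceq (k_\lambda/m)\,(\imat-\lambda\amat^{-1})$, so the variance proxy has norm $\le k_\lambda/m$. Matrix Bernstein then gives $\|\widehat\Sigma-(\imat-\lambda\amat^{-1})\|\le\varepsilon$ with probability $\ge 1-\delta$ once $m\gtrsim (k_\lambda+\log(2d/\delta))/\varepsilon^2$; since $\amat^{-1/2}\amat_S\amat^{-1/2}=\widehat\Sigma+\lambda\amat^{-1}$, this is exactly $(1-\varepsilon)\amat\preceq\amat_S\preceq(1+\varepsilon)\amat$. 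To get the additive $k_\lambda+\log(2d/\delta)$ dependence rather than the cruder $k_\lambda\log(d/\delta)$ that the ambient-dimension Bernstein bound yields, I would invoke the intrinsic-dimension form of matrix Bernstein, using that $\Tr(\imat-\lambda\amat^{-1})=\sum_i\ell_i^{(\lambda)}=k_\lambda$, so the effective dimension of the problem is $O(k_\lambda)$. This is the one step that requires genuine care; everything else is bookkeeping.

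The second half of (A) is then free, thanks to the well-specified hypothesis. Since $\yvec=\xmat\theta_{\rm lin}$ we also have $\widetilde\yvec=\widetilde\xmat\theta_{\rm lin}$, hence $\bvec_S=\widetilde\xmat^\top\widetilde\xmat\,\theta_{\rm lin}=(\amat_S-\lambda\imat)\theta_{\rm lin}$ and likewise $\bvec=(\amat-\lambda\imat)\theta_{\rm lin}$, so $\bvec_S-\bvec=(\amat_S-\amat)\theta_{\rm lin}$. Therefore $\|\bvec_S-\bvec\|_{\amat^{-1}}=\|\amat^{-1/2}(\amat_S-\amat)\amat^{-1/2}\cdot\amat^{1/2}\theta_{\rm lin}\|\le\|\amat^{-1/2}(\amat_S-\amat)\amat^{-1/2}\|\,\|\theta_{\rm lin}\|_{\amat}\le\varepsilon\|\theta_{\rm lin}\|_{\amat}$, using the spectral bound already established. (Without the noiseless assumption one would instead run a separate vector Bernstein argument on $\sum_t (m p_{i_t})^{-1}\amat^{-1/2}\x_{i_t}y_{i_t}$, whose summands are bounded by $k_\lambda\|\theta_{\rm lin}\|_{\amat}/m$ via Cauchy–Schwarz $|\x_i^\top\theta_{\rm lin}|\le\sqrt{\ell_i^{(\lambda)}}\,\|\theta_{\rm lin}\|_{\amat}$; the well-specified hypothesis lets us skip this entirely.)

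For (Q), expand $\widehat\theta-\theta^\star=\amat_S^{-1}\bvec_S-\amat^{-1}\bvec=\amat_S^{-1}\big[(\bvec_S-\bvec)-(\amat_S-\amat)\theta^\star\big]$ and take $\amat$-norms. From (A), $\amat_S^{-1}\preceq(1-\varepsilon)^{-1}\amat^{-1}$ gives $\|\amat^{1/2}\amat_S^{-1}\amat^{1/2}\|\le(1-\varepsilon)^{-1}$; the first bracketed term contributes $\le\varepsilon\|\theta_{\rm lin}\|_{\amat}$ by (A), and the second contributes $\le\varepsilon\|\theta^\star\|_{\amat}$. Finally $\|\theta^\star\|_{\amat}\le\|\theta_{\rm lin}\|_{\amat}$, since $\theta^\star=(\imat-\lambda\amat^{-1})\theta_{\rm lin}$ and $(\imat-\lambda\amat^{-1})\amat(\imat-\lambda\amat^{-1})\preceq\amat$ eigenvalue-by-eigenvalue (each eigenvalue $\mu\ge\lambda$ of $\amat$ gives $(1-\lambda/\mu)^2\mu=\mu-2\lambda+\lambda^2/\mu\le\mu$). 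Combining, $\|\widehat\theta-\theta^\star\|_{\amat}\le 2\varepsilon(1-\varepsilon)^{-1}\|\theta_{\rm lin}\|_{\amat}\le 4\varepsilon\|\theta_{\rm lin}\|_{\amat}$ for $\varepsilon<\tfrac12$. The risk bound is then immediate: $R$ is quadratic with Hessian $\amat$ and minimizer $\theta^\star$, so $R(\widehat\theta)-R(\theta^\star)=\tfrac12\|\widehat\theta-\theta^\star\|_{\amat}^2\le 8\varepsilon^2\|\theta_{\rm lin}\|_{\amat}^2$. The only real obstacle is the matrix-concentration step — specifically, extracting the intrinsic-dimension-sharpened sample complexity $k_\lambda+\log(2d/\delta)$; once (A) is in hand, parts (Q) follow by the deterministic manipulations above.
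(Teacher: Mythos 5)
Your deterministic reductions coincide with the paper's own proof: the whitening $\z_i=\amat^{-1/2}\x_i$, the use of realizability to write $\bvec_S-\bvec=(\amat_S-\amat)\theta_{\rm lin}$ and bound it in the $\amat^{-1}$-norm, the decomposition $\widehat\theta-\theta^\star=\amat_S^{-1}\big[(\bvec_S-\bvec)-(\amat_S-\amat)\theta^\star\big]$ together with $\|\amat^{1/2}\amat_S^{-1}\amat^{1/2}\|_2\le(1-\varepsilon)^{-1}$, and the contraction $\|\theta^\star\|_{\amat}\le\|\theta_{\rm lin}\|_{\amat}$ (your one-line eigenvalue computation is exactly the paper's ridge-contraction lemma). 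All of that is correct, and your passage from $\|\widehat\Sigma-(\imat-\lambda\amat^{-1})\|_2\le\varepsilon$ to $(1-\varepsilon)\amat\preceq\amat_S\preceq(1+\varepsilon)\amat$ is in fact cleaner than the paper's detour through separate max/min-eigenvalue events.

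The gap sits exactly at the step you flag as the crux, and your proposed fix does not close it. The intrinsic-dimension form of matrix Bernstein does not yield the additive sample size $m\gtrsim(k_\lambda+\log(2d/\delta))/\varepsilon^2$: it only replaces the ambient prefactor $d$ by $\mathrm{intdim}\approx k_\lambda$, while the exponent is still governed by $\|V\|\approx L\approx k_\lambda/m$, so the tail has the form $k_\lambda\exp(-c\,m\varepsilon^2/k_\lambda)$ and the requirement remains multiplicative, $m\gtrsim k_\lambda\log(k_\lambda/\delta)/\varepsilon^2$. Moreover, no concentration inequality can rescue the additive rate for i.i.d.\ leverage sampling: take $\xmat=\imat_d$ with small $\lambda$, so that ridge-leverage sampling is uniform and $k_\lambda\approx d$; a coupon-collector argument shows that $m\asymp k_\lambda/\varepsilon^2$ draws miss some coordinate with probability tending to one, and on that event the lower bound $(1-\varepsilon)\amat\preceq\amat_S$ fails whenever $\varepsilon<1/(1+\lambda)$. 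For what it is worth, the paper's own proof makes the same jump—its Chernoff tail $2d\exp(-c\,m\varepsilon^2/k_\lambda)$ likewise forces $m\gtrsim k_\lambda\log(d/\delta)/\varepsilon^2$ rather than the stated additive bound—so your argument is correct at the same level of rigor as the paper's up to that multiplicative sample complexity, but the additive $k_\lambda+\log(2d/\delta)$ claim cannot be obtained via intrinsic-dimension Bernstein (or any other tail bound) as you propose.
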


The same techniques underlying Theorem~\ref{thm:epsilon-close-ridge-full}
can also be extended beyond the noiseless linear model and to derive guarantees in the setting where the labels are
contaminated by sub-Gaussian noise. We leave a careful treatment of these extensions to future work.

\section{An Active Learning experiment}

\begin{wrapfigure}{r}{0.40\textwidth}
  \vspace{-10pt} 
  \centering
  \includegraphics[width=0.40\textwidth]{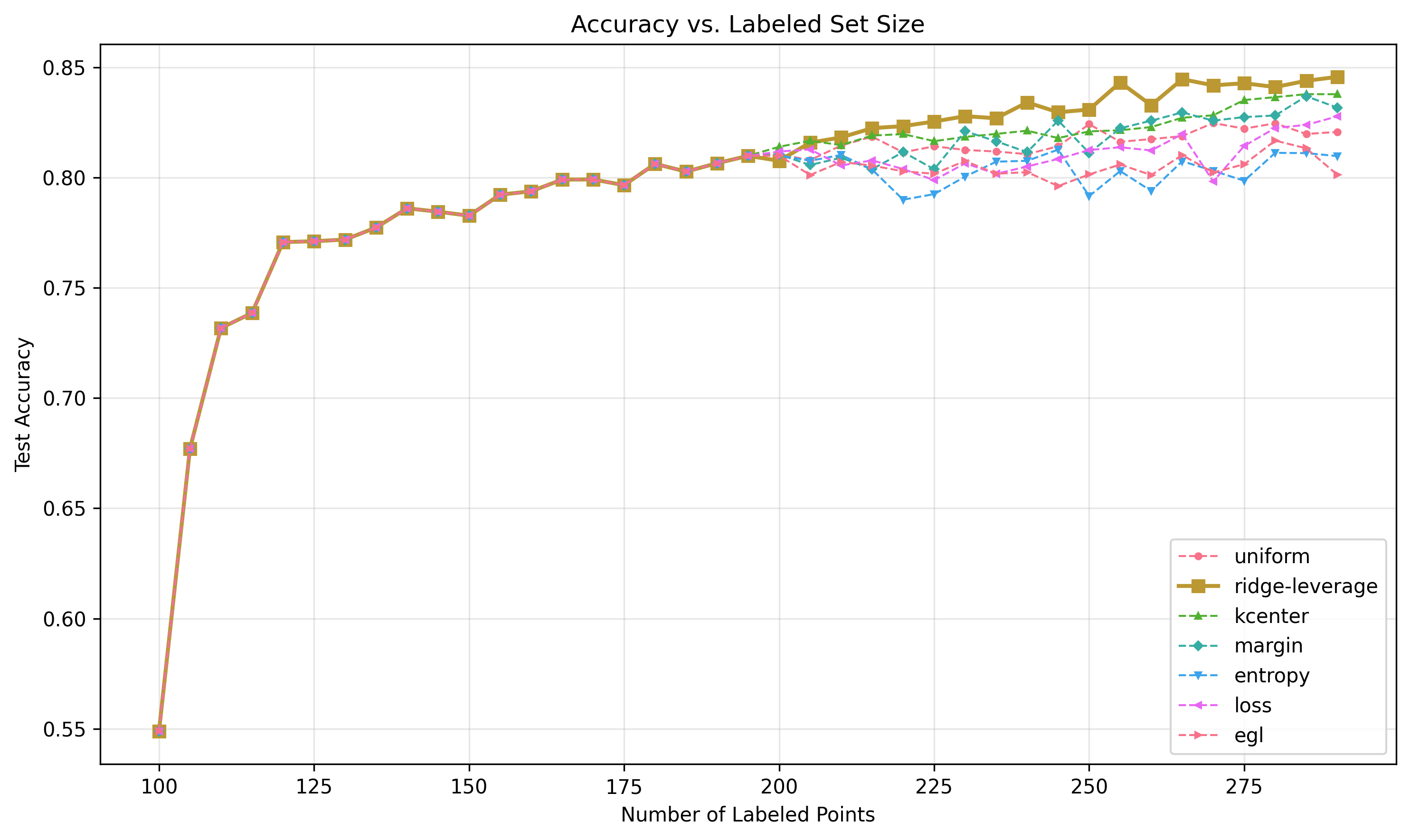}
  \includegraphics[width=0.40\textwidth]{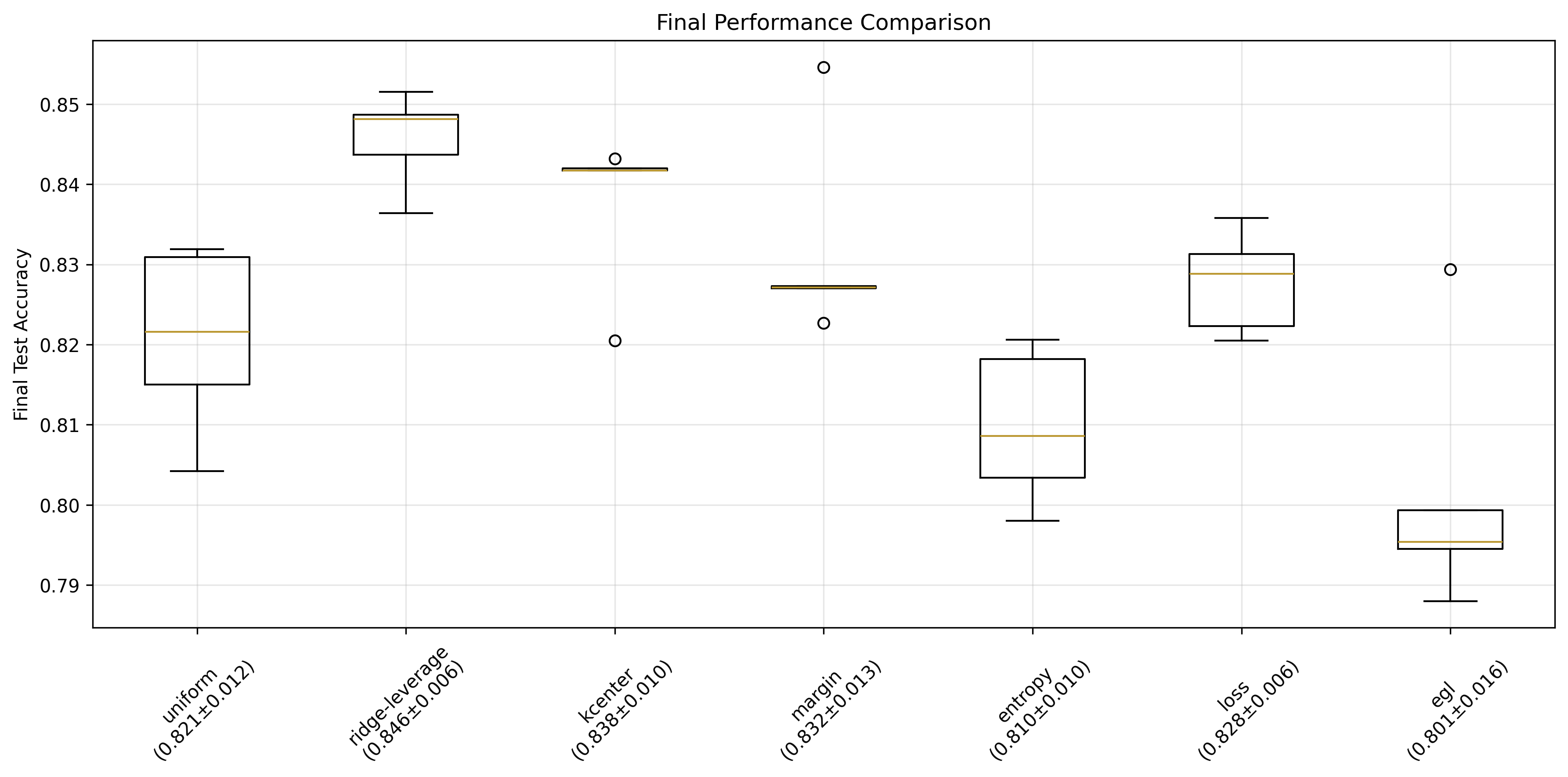}
  \caption{
      (Top) Test accuracy versus number of labeled samples for six AL strategies on MNIST.
     (Bottom) Final test accuracy after 40 acquisition rounds.}
    \label{fig:active_learning_results}
\end{wrapfigure}

To illustrate ridge leverage, we designed a small-scale Active Learning (AL) experiment on MNIST \cite{lecun1998mnist} using a 3-layer MLP (784$\rightarrow$256$\rightarrow$64$\rightarrow$10 neurons) with six selection strategies: (1) ridge leverage with adaptive regularization $\lambda = 0.01 \times \Tr(\mathbf{X}^\top \mathbf{X})/64$, and scores computed on 64-dimensional learned embeddings from the penultimate layer; (2) K-center \cite{sener2017active}, which uses greedy selection based on distances to the nearest center; (3) Margin \cite{settles2009active}, which selects samples with the smallest difference between the top-2 predicted probabilities; (4) Entropy \cite{settles2009active}, which selects samples with the highest Shannon entropy; (5) Expected Gradient Length (EGL) \cite{huang2016active}, which selects samples with the largest expected gradient length; and (6) a random uniform baseline.
Other, more sophisticated active learning strategies exist, such as BALD \cite{houlsby2011bayesian}, BatchBALD \cite{kirsch2019batchbald}, ActiveMatch \cite{yuan2022activematch}, LESS \cite{xia2024less}, TRAK \cite{park2023trak}, but we do not consider these in our experiments.
Each experiment began with 100 randomly labeled samples and performed 20 rounds of deterministic pretraining for fair comparison, followed by 40 active learning rounds selecting 5 samples per round.
We ran 5 independent trials (seeds 0–4) and evaluated performance using test accuracy \footnote{All code and experiments are available at \url{https://github.com/rodrgo/geosh}.}
As shown in Figure~\ref{fig:active_learning_results}, ridge-leverage sampling provides a clear advantage over standard active learning baselines as soon as the pretraining phase is completed.
By the end of the acquisition process, ridge-leverage attains the highest mean test accuracy ($0.846 \pm 0.006$) while maintaining low variability across runs without requiring access to gradients, labelled data, or quadratic computation.
These findings support our theoretical intuition that ridge leverage selects samples that contribute to the model's stability and generalization, making it an effective and robust strategy for data-efficient learning.

\section{Conclusion}

In this work we introduced a geometric perspective on data valuation based on ridge leverage scores.
Our main contribution is to show that these scores constitute a principled measure of the influence of individual datapoints and yield $\mathcal{O}(\varepsilon)$-close guarantees for a specific risk model.
These theoretical developments position ridge-leverage scores as a sound and tractable alternative to data Shapley values.
We further demonstrated the applicability of ridge leverage to \emph{active learning}, where it performs competitively with standard selection strategies.
Fully developing these scores into a selector that competes with the state-of-the-art warrants separate, dedicated study so we leave this as future work.

\newpage
\appendix

\section{Proof of Theorem \ref{th:axioms}}
\label{app:proof-axioms}

\begin{proof}
Let $\hmat := \xmat(\xmat^\top \xmat)^{-1}\xmat^\top$ be the projection matrix onto $\mathrm{col}(\xmat)$. Its diagonal entries are the leverage scores $\ell_i=\hmat_{ii}$, and by construction $\pi_i=\ell_i/\sum_j \ell_j$.

{\em Efficiency:} Since $\hmat$ is a projection of rank $d$, $\mathrm{Tr}(\hmat)=d$. Thus
\[
\sum_{i=1}^n \pi_i = \sum_{i=1}^n \frac{\ell_i}{d} = \frac{1}{d} \cdot \sum_{i=1}^n \ell_i = \frac{d}{d} = 1,
\]
so the total value is fully distributed.

{\em Dummy:} Suppose datapoint $\x_i$ satisfies the condition that for all subsets \( S \subseteq [n] \setminus \{i\} \), the span of \( \xmat_S \cup \{\x_i\} \) is equal to the span of \( \xmat_S \). That is, adding \( \x_i \) does not increase the dimension of the span or change the subspace. In this case, \( \x_i \in \text{col}(\xmat_S) \) for all \( S \), and so its projection onto the column space is already covered by other datapoints. This implies:
\[
\ell_i = \x_i^\top (\xmat^\top \xmat)^{-1} \x_i = 0,
\]
and hence \( \pi_i = 0 \). Therefore, if \( \x_i \) contributes no additional utility (as measured via subspace expansion), its assigned value is zero, satisfying the dummy property.

{\em Symmetry:} Suppose datapoints $\x_i$ and $\x_j$ satisfy the subset symmetry condition stated in the theorem.
The leverage score $\ell_i$ is given by:
\[
\ell_i = \x_i^\top (\xmat^\top \xmat)^{-1} \x_i,
\]
which measures the squared projection of $\x_i$ onto the column space of $\xmat$.

Now, consider the effect of removing one of the two symmetric datapoints (say, $\x_j$) from $\xmat$. Since the symmetry assumption guarantees that the span of the dataset is unchanged regardless of whether $\x_i$ or $\x_j$ is included, the projection matrix $\hmat$ remains invariant under swapping $\x_i$ and $\x_j$.
In particular, the projections of $\x_i$ and $\x_j$ onto the same column space yield the same squared norm:
\[
\ell_i = \| P \x_i \|^2 = \| P \x_j \|^2 = \ell_j,
\]
and hence
\[
\pi_i = \frac{\ell_i}{\sum_{k=1}^n \ell_k} = \frac{\ell_j}{\sum_{k=1}^n \ell_k} = \pi_j.
\]
\end{proof}

\section{Proof of Theorem~\eqref{thm:epsilon-close-ridge-full}}

First, we rephrase Theorem 1.1 in ~\cite{tropp2012user} and provide a bound on scalar factors as a Lemma.

\begin{theorem}[Matrix Chernoff~\cite{tropp2012user}]
\label{thm:matrix-chernoff}
Let $\{\mathbf{Y}_t\}_{t=1}^m$ be independent random self-adjoint matrices in $\mathbb{R}^{d\times d}$
such that
\[
\mathbf{Y}_t \succeq 0
\qquad\text{and}\qquad
\lambda_{\max}(\mathbf{Y}_t)\le R
\quad\text{almost surely.}
\]
Define
\[
\mathbf{M} \;:=\; \mathbb{E}\!\left[\sum_{t=1}^m \mathbf{Y}_t\right],
\qquad
\mu_{\min} \;:=\; \lambda_{\min}(\mathbf{M}),\quad
\mu_{\max} \;:=\; \lambda_{\max}(\mathbf{M}).
\]
Then the following bounds hold:
\begin{align*}
\Pr\!\Big[\lambda_{\max}\!\Big(\sum_{t=1}^m \mathbf{Y}_t\Big)
  \;\ge\; (1+\varepsilon)\,\mu_{\max}\Big]
&\;\le\;
d\cdot
\left[\frac{e^{\varepsilon}}{(1+\varepsilon)^{1+\varepsilon}}\right]^{\mu_{\max}/R},
\quad &&\text{for all }\varepsilon\ge 0,\\[3mm]
\Pr\!\Big[\lambda_{\min}\!\Big(\sum_{t=1}^m \mathbf{Y}_t\Big)
  \;\le\; (1-\varepsilon)\,\mu_{\min}\Big]
&\;\le\;
d\cdot
\left[\frac{e^{-\varepsilon}}{(1-\varepsilon)^{1-\varepsilon}}\right]^{\mu_{\min}/R},
\quad &&\text{for }\varepsilon\in[0,1].
\end{align*}
\end{theorem}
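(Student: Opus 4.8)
The plan is to follow the matrix Laplace-transform (Chernoff) method, which reduces a spectral tail bound to the control of a trace moment-generating function. First I would record the master inequality: for any random self-adjoint $\mathbf{X}\in\mathbb{R}^{d\times d}$ and any $\theta>0$,
\[
\Pr[\lambda_{\max}(\mathbf{X})\ge s]\;\le\;e^{-\theta s}\,\mathbb{E}\,\Tr\, e^{\theta\mathbf{X}},
\]
which follows from the spectral-mapping identity $\lambda_{\max}(e^{\theta\mathbf{X}})=e^{\theta\lambda_{\max}(\mathbf{X})}$, the bound $\lambda_{\max}(e^{\theta\mathbf{X}})\le\Tr\,e^{\theta\mathbf{X}}$ (valid since $e^{\theta\mathbf{X}}\succeq 0$), and Markov's inequality. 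Applying this with $\mathbf{X}=\sum_t\mathbf{Y}_t$ and $s=(1+\varepsilon)\mu_{\max}$ reduces the upper tail to estimating $\mathbb{E}\,\Tr\exp\!\big(\theta\sum_t\mathbf{Y}_t\big)$.

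The key structural step is subadditivity of the matrix cumulant generating function, which substitutes for the scalar factorization $\mathbb{E}\,e^{\theta\sum_t X_t}=\prod_t\mathbb{E}\,e^{\theta X_t}$ that fails for noncommuting matrices. Concretely, for independent $\mathbf{Y}_t$ I would establish
\[
\mathbb{E}\,\Tr\exp\!\Big(\theta\sum_t\mathbf{Y}_t\Big)\;\le\;\Tr\exp\!\Big(\sum_t\log\mathbb{E}\,e^{\theta\mathbf{Y}_t}\Big),
\]
by iterated conditioning together with Lieb's concavity theorem (the map $\mathbf{A}\mapsto\Tr\exp(\mathbf{H}+\log\mathbf{A})$ is concave on the positive-definite cone), which through Jensen's inequality lets each expectation be pulled inside the trace exponential. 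This is the deep analytic ingredient, and I expect it to be the \emph{main obstacle}; everything downstream is semidefinite bookkeeping.

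Next I would bound each summand. Since $0\preceq\mathbf{Y}_t\preceq R\mathbf{I}$, the scalar chord estimate $e^{\theta z}\le 1+\frac{e^{\theta R}-1}{R}z$ on $[0,R]$ transfers to the semidefinite order, giving $\mathbb{E}\,e^{\theta\mathbf{Y}_t}\preceq\mathbf{I}+g(\theta)\,\mathbb{E}\mathbf{Y}_t$ with $g(\theta):=\frac{e^{\theta R}-1}{R}$. Operator monotonicity of $\log$ together with $\log(\mathbf{I}+\mathbf{A})\preceq\mathbf{A}$ then yields $\log\mathbb{E}\,e^{\theta\mathbf{Y}_t}\preceq g(\theta)\,\mathbb{E}\mathbf{Y}_t$, and summing gives $\sum_t\log\mathbb{E}\,e^{\theta\mathbf{Y}_t}\preceq g(\theta)\mathbf{M}$. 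Monotonicity of the trace exponential in the semidefinite order, followed by $\Tr\exp(g(\theta)\mathbf{M})\le d\,e^{g(\theta)\mu_{\max}}$, collapses the estimate to $\mathbb{E}\,\Tr\exp\!\big(\theta\sum_t\mathbf{Y}_t\big)\le d\,e^{g(\theta)\mu_{\max}}$.

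Finally I would optimize the free parameter. Substituting into the master inequality gives, for every $\theta>0$, the bound $d\exp\!\big(-\theta(1+\varepsilon)\mu_{\max}+g(\theta)\mu_{\max}\big)$, and the choice $\theta=\tfrac1R\log(1+\varepsilon)$ makes $g(\theta)=\varepsilon/R$, producing exactly $d\,[\,e^{\varepsilon}/(1+\varepsilon)^{1+\varepsilon}\,]^{\mu_{\max}/R}$. For the lower tail I would rerun the identical argument on $-\mathbf{Y}_t$, using $\lambda_{\min}(\mathbf{X})=-\lambda_{\max}(-\mathbf{X})$ and the companion chord estimate $e^{-\theta z}\le 1+\frac{e^{-\theta R}-1}{R}z$ on $[0,R]$; here the relevant coefficient is negative, so the trace step controls $\Tr\exp$ in terms of $\mu_{\min}$, and the optimal choice $\theta=-\tfrac1R\log(1-\varepsilon)$, which forces the restriction $\varepsilon\in[0,1]$, yields $d\,[\,e^{-\varepsilon}/(1-\varepsilon)^{1-\varepsilon}\,]^{\mu_{\min}/R}$.
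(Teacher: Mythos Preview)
The paper does not actually prove this statement: it is quoted verbatim as Theorem~1.1 of \cite{tropp2012user} and used as a black box in the proof of Theorem~\ref{thm:epsilon-close-ridge-full}. Your outline is nonetheless correct and is precisely the argument Tropp gives---matrix Laplace transform, Lieb's concavity to obtain subadditivity of the cumulant generating function, the chord bound $e^{\theta z}\le 1+\tfrac{e^{\theta R}-1}{R}z$ on $[0,R]$ transferred to the semidefinite order, and optimization of $\theta$---so there is nothing to compare against here beyond noting that you have reproduced the cited proof rather than something the present paper supplies.
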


Now, we state a lemma that will prove useful to our argument.

\begin{lemma}[Bounds on scalar factors]\label{lem:scalar-factors}
For $\varepsilon\in(0,\tfrac12)$,
\[
\frac{e^{\varepsilon}}{(1+\varepsilon)^{1+\varepsilon}} \;\le\; \exp\!\Big(-\frac{\varepsilon^2}{3}\Big),
\qquad
\frac{e^{-\varepsilon}}{(1-\varepsilon)^{1-\varepsilon}} \;\le\; \exp\!\Big(-\frac{\varepsilon^2}{2}\Big).
\]
\end{lemma}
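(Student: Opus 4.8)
The plan is to take logarithms of both sides and reduce each inequality to a one–variable calculus statement comparing a (weighted) entropy function to a quadratic, then dispatch each by a second–derivative sign analysis (with a power–series sanity check available for the second).

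For the first bound, after applying $\log$ it suffices to show
\[
h(\varepsilon) := (1+\varepsilon)\log(1+\varepsilon) - \varepsilon - \tfrac{\varepsilon^2}{3} \;\ge\; 0 \qquad\text{for } \varepsilon\in(0,\tfrac12).
\]
I would verify this by differentiating twice: $h(0)=0$; $h'(\varepsilon)=\log(1+\varepsilon)-\tfrac{2\varepsilon}{3}$, so $h'(0)=0$; and $h''(\varepsilon)=\tfrac{1}{1+\varepsilon}-\tfrac{2}{3}$, which is nonnegative precisely when $\varepsilon\le\tfrac12$. Hence $h$ is convex on $[0,\tfrac12]$, so $h'$ is nondecreasing there and $h'\ge h'(0)=0$, whence $h$ is nondecreasing and $h\ge h(0)=0$. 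Exponentiating gives the first claim. The constant $3$ and the cutoff $\tfrac12$ are linked here: $h''$ changes sign exactly at $\varepsilon=\tfrac12$, so this is the natural range for the $\tfrac{\varepsilon^2}{3}$ estimate.

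For the second bound, after applying $\log$ it suffices to show
\[
g(\varepsilon) := (1-\varepsilon)\log(1-\varepsilon) + \varepsilon - \tfrac{\varepsilon^2}{2} \;\ge\; 0 \qquad\text{for } \varepsilon\in(0,\tfrac12).
\]
The cleanest route is the power series $\log(1-\varepsilon)=-\sum_{k\ge1}\varepsilon^k/k$: multiplying by $(1-\varepsilon)$ and telescoping yields $(1-\varepsilon)\log(1-\varepsilon)+\varepsilon=\sum_{k\ge2}\varepsilon^k/(k(k-1))$, whose $k=2$ term is exactly $\varepsilon^2/2$ and whose remaining terms are nonnegative, so $g\ge0$ on all of $[0,1)$. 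Equivalently, the same second–derivative argument applies: $g(0)=0$, $g'(\varepsilon)=-\log(1-\varepsilon)-\varepsilon$ with $g'(0)=0$, and $g''(\varepsilon)=\varepsilon/(1-\varepsilon)\ge0$ for $\varepsilon\in[0,1)$.

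I do not anticipate a genuine obstacle: this is a routine scalar estimate of the kind standard in Chernoff-type arguments. The only point meriting care is the interaction between the exponent constant and the domain in the first inequality — one should not attempt to extend the $\tfrac{\varepsilon^2}{3}$ bound beyond $\varepsilon=\tfrac12$, where $h''$ turns negative. With the lemma in hand, its use in Theorem~\ref{thm:epsilon-close-ridge-full} is mechanical: substitute these scalar factors into the two tails of Theorem~\ref{thm:matrix-chernoff}, note $\mu_{\min},\mu_{\max}\asymp\lambda$-shifted spectrum of $\amat$ with $R=\max_t\lambda_{\max}(\mathbf{Y}_t)$ controlled by the sampling weights so that $\mu/R\gtrsim k_\lambda$, and pick $m\gtrsim(k_\lambda+\log(2d/\delta))/\varepsilon^2$ so each exponent dominates $\log(2d/\delta)$.
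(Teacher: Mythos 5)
Your proof is correct, and it is more complete than the paper's, which simply defines $f(\varepsilon)=\varepsilon-(1+\varepsilon)\log(1+\varepsilon)$ and $g(\varepsilon)=-\varepsilon-(1-\varepsilon)\log(1-\varepsilon)$ and says ``use Taylor series and the result follows.'' For the second inequality your series identity $(1-\varepsilon)\log(1-\varepsilon)+\varepsilon=\sum_{k\ge 2}\varepsilon^k/(k(k-1))$ is exactly the Taylor-series route the paper gestures at, and it even shows the bound holds on all of $[0,1)$. For the first inequality you replace Taylor expansion by a second-derivative argument ($h''(\varepsilon)=\tfrac{1}{1+\varepsilon}-\tfrac23\ge 0$ on $[0,\tfrac12]$, hence $h'\ge 0$, hence $h\ge 0$), and this is a genuinely useful substitution: the naive two-term truncation $\log(1+\varepsilon)\ge\varepsilon-\varepsilon^2/2$ only yields $(1+\varepsilon)\log(1+\varepsilon)-\varepsilon\ge\tfrac{\varepsilon^2}{2}(1-\varepsilon)$, which beats $\varepsilon^2/3$ only for $\varepsilon\le\tfrac13$, so the paper's sketch would need a finer expansion to cover the full range $(0,\tfrac12)$, whereas your convexity argument handles it cleanly. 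One small caveat: your closing remark that the $\varepsilon^2/3$ bound should not be extended beyond $\varepsilon=\tfrac12$ conflates the failure of your convexity argument with failure of the inequality itself --- the inequality $(1+\varepsilon)\log(1+\varepsilon)-\varepsilon\ge\varepsilon^2/3$ in fact persists somewhat past $\tfrac12$ --- but this is immaterial to the lemma as stated.
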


\begin{proof}
Let $f(\varepsilon)=\varepsilon-(1+\varepsilon)\log(1+\varepsilon)$ and $g(\varepsilon)=-\varepsilon-(1-\varepsilon)\log(1-\varepsilon)$.
Use Taylor series on $\log(1+\varepsilon)$ and $\log(1-\varepsilon)$ and the result follows.
\end{proof}

Finally, we prove a lemma that links $\theta^\star$ and $\theta_{\rm lin}$.

\begin{lemma}[Ridge contraction in the $\|\cdot\|_{\amat}$ norm]\label{lem:ridge-contraction}
Let $\xmat\in\mathbb{R}^{n\times d}$ and $\lambda>0$, and define $\amat := \xmat^\top\xmat + \lambda\imat_d$.
Assume $\yvec = \xmat\theta_{\rm lin}$ for some $\theta_{\rm lin}\in\mathbb{R}^d$, and let
\[
\theta^\star := \arg\min_\theta \tfrac12\|\xmat\theta-\yvec\|_2^2 + \tfrac{\lambda}{2}\|\theta\|_2^2.
\]
Then
\[
\theta^\star = \amat^{-1}\xmat^\top\xmat\,\theta_{\rm lin}
= (\imat_d - \lambda\amat^{-1})\,\theta_{\rm lin}
\qquad\text{and}\qquad
\|\theta^\star\|_{\amat} \le \|\theta_{\rm lin}\|_{\amat}.
\]
\end{lemma}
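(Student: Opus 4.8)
The plan is to derive the closed form for $\theta^\star$ directly from the ridge normal equations under the linear model, and then reduce the norm inequality to a one-line operator comparison that follows from simultaneous diagonalization.

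First I would write the first-order optimality condition for the strongly convex objective $R$: $\nabla R(\theta) = \xmat^\top(\xmat\theta - \yvec) + \lambda\theta = 0$, i.e. $\amat\,\theta = \xmat^\top\yvec$, so $\theta^\star = \amat^{-1}\xmat^\top\yvec$. Substituting $\yvec = \xmat\theta_{\rm lin}$ gives $\theta^\star = \amat^{-1}\xmat^\top\xmat\,\theta_{\rm lin}$, and since $\xmat^\top\xmat = \amat - \lambda\imat_d$ this equals $\amat^{-1}(\amat - \lambda\imat_d)\theta_{\rm lin} = (\imat_d - \lambda\amat^{-1})\theta_{\rm lin}$, which is the claimed identity.

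For the norm bound, let $M := \xmat^\top\xmat \succeq 0$, so $\amat = M + \lambda\imat_d$ and $\theta^\star = \amat^{-1}M\theta_{\rm lin}$. Then $\|\theta^\star\|_{\amat}^2 = \theta_{\rm lin}^\top M\amat^{-1}\amat\amat^{-1}M\,\theta_{\rm lin} = \theta_{\rm lin}^\top M\amat^{-1}M\,\theta_{\rm lin}$, so it suffices to prove the operator inequality $M\amat^{-1}M \preceq \amat$. Since $M$ and $\amat$ commute (both are polynomials in $M$), I would diagonalize $M = \sum_i \sigma_i u_i u_i^\top$ with $\sigma_i \ge 0$; then $M\amat^{-1}M = \sum_i \frac{\sigma_i^2}{\sigma_i+\lambda} u_i u_i^\top$ while $\amat = \sum_i (\sigma_i+\lambda) u_i u_i^\top$, and the inequality reduces to the scalar fact $\frac{\sigma_i^2}{\sigma_i+\lambda} \le \sigma_i+\lambda$, equivalently $\sigma_i^2 \le (\sigma_i+\lambda)^2$, which holds as $\sigma_i,\lambda\ge 0$. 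Evaluating the quadratic form against $\theta_{\rm lin}$ yields $\|\theta^\star\|_{\amat}^2 \le \|\theta_{\rm lin}\|_{\amat}^2$, and taking square roots finishes the proof.

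There is essentially no hard step here; the only point requiring care is that both the norm manipulation and the operator inequality rely on $M$ and $\amat$ being simultaneously diagonalizable, which is immediate from $\amat = M + \lambda\imat_d$. An equivalent eigendecomposition-free route is to set $P := \imat_d - \lambda\amat^{-1}$, note that $P$ is symmetric and commutes with $\amat$, so $\|\theta^\star\|_{\amat}^2 = \theta_{\rm lin}^\top P\amat P\,\theta_{\rm lin} = \theta_{\rm lin}^\top \amat^{1/2} P^2 \amat^{1/2}\theta_{\rm lin}$, and observe that $0 \preceq P \preceq \imat_d$ (its eigenvalues are $\sigma_i/(\sigma_i+\lambda)\in[0,1)$), hence $P^2 \preceq \imat_d$ and the bound follows.
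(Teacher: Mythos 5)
Your proposal is correct and follows essentially the same route as the paper: derive $\theta^\star=\amat^{-1}\xmat^\top\xmat\,\theta_{\rm lin}=(\imat_d-\lambda\amat^{-1})\theta_{\rm lin}$ from the normal equations, then reduce the norm bound to the scalar shrinkage fact $\sigma^2/(\sigma^2+\lambda)\le 1$ in a common eigenbasis. The only cosmetic difference is that you package the spectral comparison as the operator inequality $\xmat^\top\xmat\,\amat^{-1}\xmat^\top\xmat\preceq\amat$ (equivalently $P^2\preceq\imat_d$ with $P=\imat_d-\lambda\amat^{-1}$), whereas the paper writes out the SVD of $\xmat$ and compares the coordinatewise multipliers $h_j^2/a_j^2$; both arguments are the same diagonalization in slightly different clothing.
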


\begin{proof}
First-order optimality gives $\amat\theta^\star=\xmat^\top\yvec=\xmat^\top\xmat\,\theta_{\rm lin}$, hence
\begin{equation}\label{eq:step}
\theta^\star \;=\; (\imat_d - \lambda\amat^{-1})\,\theta_{\rm lin}.
\end{equation}
Let $\xmat=\mathbf{U}\mathbf{\Sigma}\mathbf{V}^\top$ be an SVD with singular values $\sigma_1,\dots,\sigma_r>0$ (and $\sigma_j=0$ for $j>r$).
Then
\[
\amat=\xmat^\top\xmat+\lambda\imat_d
= \mathbf{V}\,(\mathbf{\Sigma}^\top\mathbf{\Sigma}+\lambda\imat_d)\,\mathbf{V}^\top,
\qquad
\amat^{1/2}=\mathbf{V}(\mathbf{\Sigma}^\top\mathbf{\Sigma}+\lambda\imat_d)^{1/2}\mathbf{V}^\top.
\]
Work in the $\mathbf{V}$-basis: $\tilde\theta_{\rm lin}:=\mathbf{V}^\top\theta_{\rm lin}$ and $\tilde\theta^\star:=\mathbf{V}^\top\theta^\star$.
By \eqref{eq:step},
\[
\tilde\theta^\star
= \Big(\imat_d-\lambda(\mathbf{\Sigma}^\top\mathbf{\Sigma}+\lambda\imat_d)^{-1}\Big)\tilde\theta_{\rm lin},
\]
and therefore
\[
\|\theta^\star\|_{\amat}
= \big\|(\mathbf{\Sigma}^\top\mathbf{\Sigma}+\lambda\imat_d)^{1/2}\tilde\theta^\star\big\|_2
= \big\|(\mathbf{\Sigma}^\top\mathbf{\Sigma}+\lambda\imat_d)^{1/2}
\big(\imat_d-\lambda(\mathbf{\Sigma}^\top\mathbf{\Sigma}+\lambda\imat_d)^{-1}\big)\tilde\theta_{\rm lin}\big\|_2.
\]

Thus, each coordinate $j$ of $\tilde\theta_{\rm lin}$ is multiplied by
\[
h_j
:= \sqrt{\sigma_j^2+\lambda}\,\Big(1-\tfrac{\lambda}{\sigma_j^2+\lambda}\Big)
= \sqrt{\sigma_j^2+\lambda}\,\tfrac{\sigma_j^2}{\sigma_j^2+\lambda}
= \frac{\sigma_j^2}{\sqrt{\sigma_j^2+\lambda}}.
\]
In comparison, the multiplier for $\|\theta_{\rm lin}\|_{\amat}
=\|\amat^{1/2}\theta_{\rm lin}\|_2$ is
\[
a_j:=\sqrt{\sigma_j^2+\lambda},
\]
because, using the orthogonal invariance of the Euclidean norm,
\[
\|\theta_{\rm lin}\|_{\amat}
=\|\amat^{1/2}\theta_{\rm lin}\|_2
=\big\|\mathbf{V}(\mathbf{\Sigma}^\top\mathbf{\Sigma}+\lambda\imat_d)^{1/2}\mathbf{V}^\top\theta_{\rm lin}\big\|_2
=\big\|(\mathbf{\Sigma}^\top\mathbf{\Sigma}+\lambda\imat_d)^{1/2}\tilde\theta_{\rm lin}\big\|_2
=\Big(\sum_{j=1}^d a_j^2\,\tilde\theta_{{\rm lin},j}^2\Big)^{1/2}.
\]

Now compare the \emph{squared} norms coordinatewise:
\[
\|\theta^\star\|_{\amat}^2
= \sum_{j=1}^d h_j^2\,\tilde\theta_{{\rm lin},j}^2
= \sum_{j=1}^d \frac{\sigma_j^4}{\sigma_j^2+\lambda}\,\tilde\theta_{{\rm lin},j}^2,
\qquad
\|\theta_{\rm lin}\|_{\amat}^2
= \sum_{j=1}^d a_j^2\,\tilde\theta_{{\rm lin},j}^2
= \sum_{j=1}^d (\sigma_j^2+\lambda)\,\tilde\theta_{{\rm lin},j}^2.
\]
For each $j$,
\[
\frac{h_j^2}{a_j^2}
= \frac{\sigma_j^4/(\sigma_j^2+\lambda)}{\sigma_j^2+\lambda}
= \frac{\sigma_j^4}{(\sigma_j^2+\lambda)^2}
= \Big(\frac{\sigma_j^2}{\sigma_j^2+\lambda}\Big)^2 \le 1,
\]
with equality only as $\lambda\to 0$. In particular, if $\sigma_j=0$ then $h_j=0$ while $a_j=\sqrt{\lambda}>0$.
Hence each summand in $\|\theta^\star\|_{\amat}^2$ is no larger than the corresponding summand in
$\|\theta_{\rm lin}\|_{\amat}^2$, and summing over $j$ gives
\[
\|\theta^\star\|_{\amat}\;\le\;\|\theta_{\rm lin}\|_{\amat}.
\]

\end{proof}

Now we're ready to prove Theorem~\eqref{thm:epsilon-close-ridge-full}.

\begin{proof}

    \noindent\textbf{First part of (A):}
Define, for each sampled index $i_t$,
\[
\mathbf{Y}_t \;:=\; \frac{1}{m\,p_{i_t}}\,\amat^{-1/2}\,\x_{i_t}\x_{i_t}^\top\,\amat^{-1/2}\;\in\mathbb{R}^{d\times d},
\qquad t=1,\dots,m.
\]
    Then $\mathbf{Y}_t\succeq 0$ and the $\mathbf{Y}_t$ are independent. Moreover, letting $\mathbf{M}:= \amat^{-1/2}\xmat^\top \xmat \amat^{-1/2}$ we obtain
\[
\mathbb{E}\!\left[\sum_{t=1}^m \mathbf{Y}_t\right]
= \sum_{i=1}^n p_i\cdot \frac{1}{m p_i}\,\amat^{-1/2}\x_i\x_i^\top\amat^{-1/2}
= \frac{1}{m}\,\amat^{-1/2}\xmat^\top \xmat \amat^{-1/2}
= \frac{1}{m}\,\mathbf{M}.
\]
Under ridge–leverage sampling $p_i=\ell_i^{(\lambda)}/k_\lambda$ with $\ell_i^{(\lambda)}=\x_i^\top\amat^{-1}\x_i$, each summand has the uniform spectral bound
\[
\|\mathbf{Y}_t\|_2
= \frac{1}{m p_{i_t}}\,\|\amat^{-1/2}\x_{i_t}\|_2^2
= \frac{1}{m p_{i_t}}\,\ell^{(\lambda)}_{i_t}
= \frac{k_\lambda}{m}
\]
Now, let $R:= k_\lambda/m$. Since $\xmat^\top\xmat\preceq \amat=\xmat^\top\xmat+\lambda\imat_d$, we have $0\preceq \mathbf{M}\preceq \mathbf{I}_d$, hence $\lambda_{\max}(\mathbf{M})\le 1$ and $\lambda_{\min}(\mathbf{M})\in[0,1]$.
    Using Theorem~\ref{thm:matrix-chernoff} with Lemma~\ref{lem:scalar-factors} and the fact that $\lambda_{\max}(\mathbf{M})\le 1$, and $R=k_\lambda/m$, we obtain
\[
\Pr\!\Big[\ \lambda_{\max}\!\Big(\sum_{t=1}^m \mathbf{Y}_t\Big)\ \ge\ (1+\varepsilon)\,\lambda_{\max}(\mathbf{M})\ \Big]
\;\le\; d\cdot \exp\!\Big(-\frac{m\,\varepsilon^2}{3k_\lambda}\Big),
\]
\[
\Pr\!\Big[\ \lambda_{\min}\!\Big(\sum_{t=1}^m \mathbf{Y}_t\Big)\ \le\ (1-\varepsilon)\,\lambda_{\min}(\mathbf{M})\ \Big]
\;\le\; d\cdot \exp\!\Big(-\frac{m\,\varepsilon^2}{2k_\lambda}\Big).
\]
Finally, note that the event $\left\|\textstyle\sum_{t=1}^m \mathbf{Y}_t - \mathbf{M}\right\|_2
\;\ge\; \varepsilon$ implies that either 
\[
\lambda_{\max}\!\Big(\textstyle\sum_{t=1}^m \mathbf{Y}_t\Big) \ge (1+\varepsilon)\lambda_{\max}(\mathbf{M})
\ \ \text{or}\
\lambda_{\min}\!\Big(\textstyle\sum_{t=1}^m \mathbf{Y}_t\Big) \le (1-\varepsilon)\lambda_{\min}(\mathbf{M}).
\]

Using a union bound on the event $\left\|\textstyle\sum_{t=1}^m \mathbf{Y}_t - \mathbf{M}\right\|_2
\;\ge\; \varepsilon$ gives 
\[
\Pr\!\Big[\ \big\|\textstyle\sum_{t=1}^m \mathbf{Y}_t - \mathbf{M}\big\|_2 \ge \varepsilon \ \Big]
\;\le\; d\,\exp\!\Big(-\frac{m\,\varepsilon^2}{3k_\lambda}\Big)\;+\; d\,\exp\!\Big(-\frac{m\,\varepsilon^2}{2k_\lambda}\Big)
\;\le\; 2d\cdot \exp\!\Big(-c\,\frac{m\,\varepsilon^2}{k_\lambda}\Big),
\]
for some absolute $c\in(0,\tfrac12]$ and all $\varepsilon\in(0,\tfrac12)$.
Therefore, if

\[
m \;\ge\; C\,\frac{k_\lambda + \log(2d/\delta)}{\varepsilon^2},
\]
for a suitably large constant $C>0$, the deviation event occurs with probability at most $\delta/2$,
so with probability at least $1-\delta/2$,
\[
\Big\|\textstyle\sum_{t=1}^m \mathbf{Y}_t - \mathbf{M}\Big\|_2 \le \varepsilon.
\]
Since $\mathbf{M}\succeq 0$, this implies
\[
(1-\varepsilon)\,\mathbf{M}
\;\preceq\;
\sum_{t=1}^m \mathbf{Y}_t
\;\preceq\;
(1+\varepsilon)\,\mathbf{M}.
\]
Undoing the $\amat^{-1/2}$ normalization gives
\[
(1-\varepsilon)\,\xmat^\top\xmat
\;\preceq\;
\widetilde\xmat^\top\widetilde\xmat
\;\preceq\;
(1+\varepsilon)\,\xmat^\top\xmat,
\]
and adding the ridge term $\lambda\imat_d$ yields
\[
(1-\varepsilon)\,\amat
\;\preceq\;
\amat_S
\;\preceq\;
(1+\varepsilon)\,\amat.
\]

\noindent\textbf{Second part of (A).}
By first-order optimality of ridge regression,
\[
\amat\,\theta^\star = \bvec,
\qquad
\text{where}\quad
\amat = \xmat^\top\xmat + \lambda\imat_d,
\quad
\bvec = \xmat^\top\yvec.
\]
Under the realizable assumption $\yvec = \xmat\theta_{\rm lin}$, we have
\[
\bvec = \xmat^\top\yvec = \xmat^\top(\xmat\theta_{\rm lin}) = \xmat^\top\xmat\,\theta_{\rm lin}.
\]
For the sampled system, recall that $\widetilde\yvec = \wmat\smat\yvec$ and $\widetilde\xmat = \wmat\smat\xmat$,
so substituting the same realizable model gives
\[
\bvec_S = \widetilde\xmat^\top\widetilde\yvec
= (\wmat\smat\xmat)^\top(\wmat\smat\yvec)
= \xmat^\top\smat^\top\wmat^2\smat\xmat\,\theta_{\rm lin}
= \widetilde\xmat^\top\widetilde\xmat\,\theta_{\rm lin}.
\]
Hence,
\[
\bvec_S - \bvec
= (\widetilde\xmat^\top\widetilde\xmat - \xmat^\top\xmat)\,\theta_{\rm lin}.
\]

\noindent
We now bound this deviation in the $\amat^{-1}$-norm:
\[
\|\bvec_S - \bvec\|_{\amat^{-1}}
= \big\|\amat^{-1/2}(\widetilde\xmat^\top\widetilde\xmat - \xmat^\top\xmat)\amat^{-1/2}\,
\amat^{1/2}\theta_{\rm lin}\big\|_2
\le
\big\|\amat^{-1/2}(\widetilde\xmat^\top\widetilde\xmat - \xmat^\top\xmat)\amat^{-1/2}\big\|_2
\;\|\theta_{\rm lin}\|_{\amat}.
\]

\noindent
By the spectral approximation established in the proof for the first part of (A),
\[
\big\|\amat^{-1/2}(\widetilde\xmat^\top\widetilde\xmat - \xmat^\top\xmat)\amat^{-1/2}\big\|_2
\;\le\; \varepsilon
\quad\text{with probability at least } 1-\tfrac{\delta}{2}.
\]
Combining these inequalities yields
\[
\|\bvec_S - \bvec\|_{\amat^{-1}}
\;\le\;
\varepsilon\,\|\theta_{\rm lin}\|_{\amat},
\]
which completes the proof of the second inequality in~(A).

\medskip
\noindent\textbf{First inequality in (Q):}
Let $\Delta := \amat_S - \amat$ and $e := \bvec_S - \bvec$.
Using $\amat\theta^\star=\bvec$ and $\amat_S\widehat\theta=\bvec_S$,
\[
\widehat\theta-\theta^\star
= \amat_S^{-1}\big(\bvec_S - \amat_S\theta^\star\big)
= \amat_S^{-1}\big(e - \Delta\theta^\star\big).
\]
Taking the $\amat$-norm and inserting $\amat^{1/2}\amat^{-1/2}$,
\[
\|\widehat\theta-\theta^\star\|_{\amat}
\le
\|\amat^{1/2}\amat_S^{-1}\amat^{1/2}\|_2\,
\big(\|e\|_{\amat^{-1}}+\|\Delta\theta^\star\|_{\amat^{-1}}\big).
\]
From the spectral part of~(A), $\amat_S\succeq(1-\varepsilon)\amat$, hence
\[
\|\amat^{1/2}\amat_S^{-1}\amat^{1/2}\|_2 \le \frac{1}{1-\varepsilon}.
\]
Also $-\varepsilon\amat\preceq \Delta \preceq \varepsilon\amat$, so
\[
\|\Delta\theta^\star\|_{\amat^{-1}}
= \|\amat^{-1/2}\Delta\amat^{-1/2}\,\amat^{1/2}\theta^\star\|_2
\le \varepsilon\,\|\theta^\star\|_{\amat}.
\]
Combining this with the second part of~(A), $\|e\|_{\amat^{-1}}\le \varepsilon\|\theta_{\rm lin}\|_{\amat}$, and Lemma~\ref{lem:ridge-contraction} (which gives $\|\theta^\star\|_{\amat}\le \|\theta_{\rm lin}\|_{\amat}$), we obtain
\[
\|\widehat\theta-\theta^\star\|_{\amat}
\le \frac{1}{1-\varepsilon}\,\big(\varepsilon\|\theta_{\rm lin}\|_{\amat}+\varepsilon\|\theta^\star\|_{\amat}\big)
\le \frac{2\varepsilon}{1-\varepsilon}\,\|\theta_{\rm lin}\|_{\amat}
\le 4\varepsilon\,\|\theta_{\rm lin}\|_{\amat},
\]
since $\varepsilon<\tfrac12$.

\medskip
\noindent\textbf{Second inequality in (Q):}
For the quadratic ridge objective,
\[
R(\theta)-R(\theta^\star)=\tfrac12\|\theta-\theta^\star\|_{\amat}^2.
\]
Hence,
\[
R(\widehat\theta)-R(\theta^\star)
=\tfrac12\|\widehat\theta-\theta^\star\|_{\amat}^2
\;\le\; \tfrac12\,(4\varepsilon)^2\,\|\theta_{\rm lin}\|_{\amat}^2
\;=\; 8\,\varepsilon^2\,\|\theta_{\rm lin}\|_{\amat}^2,
\]
where we used the first inequality in (Q) to bound
$\|\widehat\theta-\theta^\star\|_{\amat}\le 4\varepsilon\|\theta_{\rm lin}\|_{\amat}$.

\end{proof}

\bibliographystyle{plainnat}
\bibliography{ref}

\end{document}